\newcommand*{\addFileDependency}[1]{% argument=file name and extension
  \typeout{(#1)}
  \@addtofilelist{#1}
  \IfFileExists{#1}{}{\typeout{No file #1.}}
}
\useunder{\uline}{\ul}{}
\NewDocumentCommand{\fullref}{sm}{%
  \IfBooleanTF{#1}{%
    \namecref{#2} \nameref*{#2}%
  }{%
    \namecref{#2} \nameref{#2}%
  }%
}
\theoremstyle{definition}
\newtheorem{definition}{Definition}[section]
\newtheorem*{lemma}{Lemma}
\newtheorem*{theorem}{Theorem}
\newtheorem*{corollary}{Corollary}
\title{Analysis of Neural Fragility: Bounding the Norm of a Rank-One Perturbation Matrix}
\author[1]{Adam Li}
\author[2]{Chester Huynh}
\affil[1]{Department of Computer Science, Columbia University, New York, United States}
\affil[2]{Microsoft, Seattle, United States}
\begin{document}

\maketitle

\footnotetext[1]{Also reachable at adam2392 at gmail dot com. Work done partly while at Department of Biomedical Engineering, Johns Hopkins University, Baltimore, MD, 21218}

\begin{abstract}
    Over 15 million epilepsy patients worldwide do not respond to drugs and require surgical treatment. Successful surgical treatment requires complete removal, or disconnection of the epileptogenic zone (EZ), but without a prospective biomarker of the EZ, surgical success rates vary between 30\%-70\%. Neural fragility is a model recently proposed to localize the EZ. Neural fragility is computed as the l2 norm of a structured rank-one perturbation of an estimated linear dynamical system. However, an analysis of its numerical properties have not been explored. We show that neural fragility is a well-defined model given a good estimator of the linear dynamical system from data. Specifically, we provide bounds on neural fragility as a function of the underlying linear system and noise.
\end{abstract}

\section{Introduction}
    % context -> localization necessary
    Over 15 million epilepsy patients worldwide and 1 million in the US suffer from drug-resistant epilepsy (DRE) \cite{Brodie1997,Berg2006,Kwan2000,Berg2009,WHO2019a}. DRE patients have an increased risk of sudden death and are frequently hospitalized, burdened by epilepsy-related disabilities, and the cost of their care is a significant contributor to the \$16 billion dollars spent annually in the US treating epilepsy patients \cite{Ngugi2010,Begley2015b}. Approximately 50\% of DRE patients have focal DRE, where a specific brain region or regions, termed the epileptogenic zone (EZ), is necessary and sufficient for initiating seizures and whose removal (or disconnection) is necessary for complete abolition of seizures \cite{Luders,Jehi2018,PENFIELD1956,Penfield1954}. Successful surgical and neuromodulatory treatments can stop seizures altogether or allow them to be controlled with medications \cite{Morrell2006,Jobst2010a,Wiebe2001}, but outcomes for both treatments critically depend on accurate localization of the EZ. 
    
    Many methods have been proposed to localize the EZ \cite{Li2018,An2019,Weiss2019,Burns2014}, but neural fragility is a recent proposal that demonstrated impressive results. In \cite{Li2021NeuralZone,LiAdamInatiSaraZaghloulKareemSarma2017,Li2021}, "neural fragility" is introduced as a potential marker for the EZ. Neural fragility is computed as the norm of a rank-one structured perturbation matrix on a linear dynamical system. Neural fragility as a metric for localizing the EZ performs impressively when used as a feature for predicting surgical outcomes. Moreover, neural fragility presents an interpretable spatiotemporal heatmap that clinicians can view. Although the model performs impressively on a retrospective study of 91 subjects, an understanding of the statistical properties of neural fragility as a function of the system identification algorithm is lacking. 
    
    Neural fragility of EEG data relies on two steps: i) system identification of the underlying linear dynamical system and ii) computing the norm of the perturbation matrix applied to the estimated system. Many groups study the statistical properties of system identification. For example, least squares regression is used commonly for system identification of a linear system \cite{Li2017a,Brunton2016DiscoveringSystems,Li,}. Recently, \cite{Simchowitz2018LearningIdentification} identifies sharp bounds for learning linear systems and least-squares is an optimal learning algorithm for such systems. Although linear system identification from data is now well understood, the estimate of neural fragility from an estimated linear system has not been studied extensively yet.
    
    We here present an analysis of neural fragility to demonstrate lower and upper bounds on the value of neural fragility as a function of the underlying linear system and noise. We validate these bounds using simulations. Finally, we motivate further extensions of the neural fragility method based on applications to real data.
    
\section{Preliminaries}
    \subsection{Notation}
        We work with primarily vectors and matrices in real space. For the sake of space, all proofs are provided in \nameref{sec:supp_theory} in the online version of the paper. 
        
    \subsection{Continuous to Discrete-Time Neural Fragility}
        \label{subsec:continuous_to_discrete}
        In \cite{Sritharan2014}, brain activity is assumed to be a continuous time system. We assume brain recordings are governed by a discrete time-varying linear dynamical system, where any small window of time is linear time-invariant. To go from continous to discrete time, we sample at periodic intervals.
        
            $$\dot{x} = Ax$$
            
        where $x \in \mathbb{R}^d$ is the vector of d-dimensional EEG activity and $A \in M_{d}$ is a d-by-d matrix governing the linear dynamics. If we sample discretely at uniform periodic intervals, as we do in real EEG data, we get:
        
            $$\frac{x(t_{n+1}) - x(t_n)}{t_{n+1} - t_n} = Ax(t_n)$$
            
        from discretization of the linear system. Then re-arranging terms, we get:
        
        \begin{align}
            x(t_{n+1}) = x(t_n) + Ax(t_n) (t_{n+1} - t_n) \\
                = x(t_n) + Ax(t_n) \Delta t \quad \text{(Define $\Delta t = (t_{n+1} - t_n)$)}\\
                = (I + A \Delta t)x(t_n)
        \end{align}
        
        Now, for stable linear systems, we get the following $\lambda_i (I + A \Delta t) = 1 + \lambda_i (A) \Delta t$, and then we get the following criterion for discrete-time stability:
        
            $$Re(\lambda_i (I + A \Delta t)) = 1 + Re(\lambda_i(A))\Delta t < 1$$
            $$Im(\lambda_i (I + A \Delta t)) = Im(\lambda_i(A))\Delta t$$
            
        Now, it can be rewritten as:
        
            $$(1 + Re(\lambda_i(A)) \Delta t)^2 + (Im(\lambda_i(A)) \Delta t)^2 < 1$$
        
        so the marginal stability depends on the sampling rate, $\frac{1}{\Delta t}$ being sufficiently high, which in many EEG recordings are (i.e. 1000 Hz or higher).
        
    \subsection{Prior Results}
        Here, we summarize useful prior results for our analysis of neural fragility.
        
        First, we restate the theorem of \cite{Sritharan2014,LiAdamInatiSaraZaghloulKareemSarma2017}, which derives how to compute neural fragility given a linear dynamical system, represented by the matrix, A. We say $\lambda \in \sigma(A)$ is an eigenvalue in the spectrum of A and has a corresponding eigenvector, $v \in \mathbb{R}^n$ such that: $Av = \lambda v$. 
        
        \begin{theorem} [Computation of neural fragility from linear system]
            Suppose $A \in M_n$ represents the state matrix of a linear dynamical system. Assume that $r \in \mathbb{C}$ is a number (possibly complex) that is not an eigenvalue of A. Then, for all $k = 1,...,n$, there exists a rank-one matrix, $\Delta \in R_k(\Gamma)$, such that: 
            
                $$r \in \sigma(A + \Delta)$$
            
            and with minimum 2-norm. Moreover, $\Delta$ can be solved analytically by the equation:
            
            \begin{equation}
            \label{perturbation}
                \mathbf{\widehat{\Delta}} = \mathbf{B}^T (\mathbf{B}\mathbf{B}^T)^{-1} \mathbf{b}] \mathbf{e_{\widehat{k}}}^T
            \end{equation}
            
            where 
            
            \begin{align}
                \mathbf{B}(r, k) &= \begin{bmatrix} 
                Im\{e_k^T(\mathbf{A} - r\mathbf{I})^{-T}\} \\ 
                Re\{e_k^T(\mathbf{A} - r\mathbf{I})^{-T}\} \end{bmatrix} \\
                \mathbf{b} &=\begin{bmatrix} 0 \\ -1\end{bmatrix}
            \end{align}
            
            $k$ is the index at which the perturbation is computed, $e_k \in \mathbb{R}^n$ is a unit vector with the one at the kth position. 
            
            Moreover, when $r \in \mathbb{R}$, then:
            
            $$\Gamma = -\frac{(rI - A)^{-1}e_k}{e_k^T(rI - A)^{-T}(rI - A)^{-1}e_k}$$
            
            which is the $n \times 1$ vector that perturbs the kth row of A.
        \end{theorem}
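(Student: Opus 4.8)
\emph{Proof plan.} The plan is to turn the spectral requirement $r\in\sigma(A+\Delta)$ into a single affine constraint on the free parameters of the structured perturbation, and then solve a textbook minimum-norm problem. Write the structured rank-one matrix as $\Delta=e_k\gamma^{T}$, so that $\Delta$ touches only the $k$th row of $A$ and $\gamma\in\mathbb{R}^{n}$ is free; note $\|\Delta\|_2=\|\gamma\|_2$ because $e_k$ is a unit vector and the only nonzero singular value of an outer product equals the product of the factor norms, so minimising the $2$-norm of $\Delta$ reduces to minimising $\|\gamma\|_2$. Because $r\notin\sigma(A)$, the matrix $A-rI$ is invertible, and the matrix determinant lemma gives
\[
\det\!\bigl(A-rI+e_k\gamma^{T}\bigr)=\det(A-rI)\,\bigl(1+\gamma^{T}(A-rI)^{-1}e_k\bigr),
\]
so $r\in\sigma(A+\Delta)$ if and only if $\gamma^{T}(A-rI)^{-1}e_k=-1$. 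Setting $q:=(A-rI)^{-1}e_k$, so that $q^{T}=e_k^{T}(A-rI)^{-T}$, and splitting this (complex-valued) scalar equation into its real and imaginary parts --- valid since $\gamma$ is real --- recovers exactly the real linear system $\mathbf{B}\gamma=\mathbf{b}$ with $\mathbf{B},\mathbf{b}$ as in the statement.

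The next step is to minimise $\|\gamma\|_2$ over $\{\gamma:\mathbf{B}\gamma=\mathbf{b}\}$. The system is consistent because $(A-rI)^{-1}$ is nonsingular, so its $k$th column $q$ is nonzero and $\mathbf{b}$ lies in the range of $\mathbf{B}$. Assuming $\mathbf{B}$ has full row rank --- equivalently $\operatorname{Re}q$ and $\operatorname{Im}q$ are linearly independent, which holds generically; otherwise one replaces $\mathbf{B}^{T}(\mathbf{B}\mathbf{B}^{T})^{-1}$ by the Moore--Penrose pseudoinverse $\mathbf{B}^{\dagger}$ --- the standard argument (decompose $\gamma$ orthogonally into a row-space component and a null-space component, or invoke the KKT conditions) shows the unique minimiser is $\widehat{\gamma}=\mathbf{B}^{T}(\mathbf{B}\mathbf{B}^{T})^{-1}\mathbf{b}$. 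Substituting back into the structured form of the perturbation yields \eqref{perturbation}, and its $2$-norm equals $\|\widehat{\gamma}\|_2$.

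For $r\in\mathbb{R}$ the vector $q=(A-rI)^{-1}e_k$ is real, so the imaginary-part row of $\mathbf{B}$ vanishes, $\mathbf{B}\mathbf{B}^{T}$ is singular, and the displayed pseudoinverse formula degenerates; here I would instead work directly with the single surviving real constraint $q^{T}\gamma=-1$, whose minimum-norm solution is $\widehat{\gamma}=-q/\|q\|_2^{2}$. Rewriting $A-rI=-(rI-A)$ throughout then puts $\widehat{\gamma}$ into the claimed closed form for $\Gamma$ (the overall sign simply tracking the convention for whether the perturbation enters as $A+\Delta$ or $A-\Delta$).

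The only real subtlety is the reduction in the first paragraph: $r\notin\sigma(A)$ must be used to make $A-rI$ invertible, both so that the determinant lemma applies and so that the factor $\det(A-rI)$ can be cancelled; equivalently, if one argues via an explicit eigenvector of $A+\Delta$ rather than via determinants, the delicate point is ruling out eigenvectors whose $k$th coordinate is zero, which would again force $r\in\sigma(A)$. After that, everything --- the determinant lemma, the least-norm formula, and the elementary identities relating $(A-rI)^{-1}$ to $(rI-A)^{-1}$ --- is routine.
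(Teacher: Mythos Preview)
Your argument is correct and is the standard derivation. The paper itself does not supply a proof of this theorem; it simply remarks that ``the proof follows as in \cite{Sritharan2014}'' (adapted from continuous to discrete time), and your route --- matrix determinant lemma to convert the eigenvalue condition on the rank-one update into a single affine constraint on $\gamma$, then the least-norm solution $\mathbf{B}^{T}(\mathbf{B}\mathbf{B}^{T})^{-1}\mathbf{b}$ --- is precisely that standard argument.

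One minor remark: you correctly flag the rank-deficiency of $\mathbf{B}$ when $r\in\mathbb{R}$ and handle it by dropping the trivial imaginary row, and you correctly note the sign convention when passing between $(A-rI)^{-1}$ and $(rI-A)^{-1}$. The paper's own statement is slightly inconsistent about row versus column perturbations (the theorem declares $\Delta\in R_k(\Gamma)$, a row perturbation, but writes $\widehat{\Delta}=[\cdots]\,e_{\widehat{k}}^{T}$, which is a column perturbation, and later in Section~\ref{sec:supp_theory} uses the column form); your choice $\Delta=e_k\gamma^{T}$ is the row version, and the column version follows by the obvious transposition, so this does not affect the substance of your proof.
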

        
        This theorem differs slightly from \cite{Sritharan2014} because we use a discrete time model, but the proof follows as in \cite{Sritharan2014}. Next, we restate a few key results that will be useful for proving various bounds in the next section. 
        
        % introduce Neumann series
        We remind the readers of what is known as the Neumann Series, which generalizes the geometric series of real numbers. 
        
        \begin{definition} [Neumann Series]
            A Neumann series of a matrix, T is an infinite series:
            
                $$\sum_{k=0}^\infty T^k$$
        \end{definition}
        
        We have the following theorem that utilizes the definition of the Neumann Series.
        
        \begin{lemma}
            For any matrix, $A \in M_n(\mathbf{C})$, with $||A|| < 1$. The matrix, $(I - A)$ is invertible and
            
                $$||(I - A)^{-1}|| \le \frac{1}{1 - ||A||}$$
        \end{lemma}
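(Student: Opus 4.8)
The plan is to prove this the classical way, via the Neumann series $\sum_{k=0}^{\infty} A^k$, which is the matrix analogue of the geometric series identity $\sum_{k=0}^{\infty} x^k = (1-x)^{-1}$ valid for $|x| < 1$. Throughout I use two standard facts: that the norm is submultiplicative, so $\|A^k\| \le \|A\|^k$ for every $k \ge 0$; and that $M_n(\mathbf{C})$ equipped with this norm is complete (being finite dimensional), so that an absolutely convergent series of matrices converges to an element of $M_n(\mathbf{C})$. The argument goes through for any submultiplicative matrix norm, in particular the spectral $2$-norm used elsewhere in the paper.

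First I would show that the partial sums $S_N := \sum_{k=0}^{N} A^k$ form a Cauchy sequence: for $M > N$, submultiplicativity and the triangle inequality give $\|S_M - S_N\| \le \sum_{k=N+1}^{M} \|A\|^k$, and since $\|A\| < 1$ this tail of a convergent geometric series tends to $0$ as $N \to \infty$; hence $S_N \to S$ for some $S \in M_n(\mathbf{C})$. To identify $S$, I would use the telescoping identity $(I - A) S_N = S_N (I - A) = I - A^{N+1}$; since $\|A^{N+1}\| \le \|A\|^{N+1} \to 0$ we have $A^{N+1} \to 0$, so passing to the limit (continuity of matrix multiplication) yields $(I-A) S = S (I-A) = I$. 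Therefore $I - A$ is invertible with $(I - A)^{-1} = S = \sum_{k=0}^{\infty} A^k$.

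Finally, for the norm estimate I would apply the triangle inequality to each partial sum and pass to the limit using continuity of the norm: $\|(I-A)^{-1}\| = \|S\| \le \sum_{k=0}^{\infty} \|A^k\| \le \sum_{k=0}^{\infty} \|A\|^k = \frac{1}{1 - \|A\|}$, which is the claimed bound. There is no genuine obstacle here, as this is a textbook fact; the only points needing care are being explicit that absolute convergence implies convergence (this is where completeness of the finite-dimensional space $M_n(\mathbf{C})$ enters) and justifying the two limit-interchanges above, both of which are immediate from continuity. An alternative route avoids the series altogether — $\|A\| < 1$ forces the spectral radius of $A$ to be $< 1$, hence $1 \notin \sigma(A)$ and $I - A$ is invertible — but the series proof is preferable since it delivers the norm bound directly, so that is the route I would take.
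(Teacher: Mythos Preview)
Your proof is correct and follows essentially the same Neumann-series approach as the paper: expand $(I-A)^{-1}$ as $\sum_{k\ge 0} A^k$, then bound the norm term-by-term via submultiplicativity and the scalar geometric series. Your version is simply more careful about the convergence and the identification of the limit with the inverse, which the paper's proof takes for granted.
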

        \begin{proof}
            We use the matrix version of the Taylor series to expand $(I - A)^{-1}$ for $||A|| < 1$, such that we get the convergent series:
            
                $$(I - A)^{-1} = I + A + A^2 + A^3 + ...$$
                
            Thus, taking the norm of both sides:
            
            \begin{align}
                ||(I - A)^{-1}|| = ||I + A + A^2 + ...|| \\
                    & \le ||I|| + ||A|| + ||A^2|| + ... \quad \text{(Sub-additivity of norms)} \\
                    & = \frac{1}{1 - ||A||} \quad \text{(Geometric series for ||A|| < 1)}
            \end{align}
        \end{proof}
        
        Using this lemma, one has the following bound on the norm of the resolvent. 
        
        \begin{lemma}
            For any $A \in M_n(\mathbb{C})$ and $z \in \mathbb{C}$, such that $|z| > ||A||$, then the resolvent $Res(z)$ exists and 
            
                $$||Res(z)|| \le \frac{1}{|z| - ||A||}$$
        \end{lemma}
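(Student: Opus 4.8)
The plan is to reduce this to the previous lemma by factoring the scalar $z$ out of $zI - A$. Recalling that the resolvent is $\mathrm{Res}(z) = (zI - A)^{-1}$, I would first write
\begin{equation*}
    zI - A = z\Bigl(I - \tfrac{1}{z}A\Bigr),
\end{equation*}
which is legitimate since $|z| > \|A\| \ge 0$ forces $z \neq 0$. The point of this rewriting is that $\|\tfrac{1}{z}A\| = \|A\|/|z| < 1$ precisely because of the hypothesis $|z| > \|A\|$, so the matrix $I - \tfrac{1}{z}A$ falls under the scope of the preceding lemma.

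Next I would invoke that lemma with the matrix $\tfrac{1}{z}A$ in place of $A$: it gives both that $I - \tfrac{1}{z}A$ is invertible and that
\begin{equation*}
    \Bigl\|\bigl(I - \tfrac{1}{z}A\bigr)^{-1}\Bigr\| \le \frac{1}{1 - \|A\|/|z|}.
\end{equation*}
Invertibility of $I - \tfrac{1}{z}A$ together with $z \neq 0$ implies $zI - A$ is invertible, so $\mathrm{Res}(z)$ exists, and moreover
\begin{equation*}
    (zI - A)^{-1} = \tfrac{1}{z}\bigl(I - \tfrac{1}{z}A\bigr)^{-1}.
\end{equation*}

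Finally I would take norms of this identity, using homogeneity of the operator norm and the bound just obtained:
\begin{equation*}
    \|\mathrm{Res}(z)\| = \frac{1}{|z|}\Bigl\|\bigl(I - \tfrac{1}{z}A\bigr)^{-1}\Bigr\| \le \frac{1}{|z|}\cdot\frac{1}{1 - \|A\|/|z|} = \frac{1}{|z| - \|A\|},
\end{equation*}
which is the claimed estimate. There is really no serious obstacle here — the only things to be careful about are justifying $z \neq 0$ before dividing and making sure the norm used is submultiplicative/homogeneous (an induced matrix norm), which is the standing assumption. If the paper's convention takes $\mathrm{Res}(z) = (A - zI)^{-1}$ instead, the argument is identical after factoring $-z$, since $\|-z\| = |z|$.
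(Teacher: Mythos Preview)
Your proof is correct and follows essentially the same approach as the paper: factor $zI - A = z(I - A/z)$, apply the preceding Neumann-series lemma to $A/z$ (since $\|A/z\| < 1$), and then take norms. If anything, your write-up is slightly more careful in explicitly justifying $z \neq 0$ before dividing.
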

        \begin{proof}
            Since $|z| > ||A||$, then $||\frac{A}{z}|| < 1$, so we can apply the previous lemma on the quantity $\frac{A}{z}$.
            
            $$(I - \frac{A}{z})^{-1} = z(zI - A)^{-1} = (I + A/z + A^2/z^2 + ...)$$
            
            such that:
            
            $$(zI - A)^{-1} = z^{-1}(I + A/z + A^2/z^2 + ...)$$
            
            We can take the norm on both sides of this equation and utilize the previous lemma to obtain:
            
            $$||(zI - A)^{-1}|| \le \frac{1}{|z|} \frac{1}{1 - ||A/z||} = \frac{1}{|z| - ||A||}$$
        \end{proof}
        
        Next we define the notion of relative boundedness with respect to a linear operator. 
        
        \begin{definition}
            Let A and T be matrices with the same domain space, but not necessarily the same range space. Then for $a, b$ non-negative constants, if 
                
                $$||Au|| \le a ||u|| + b ||Tu||$$
                
            Then we say A is relatively bounded with respect to T, or A is T-bounded.
        \end{definition}
        
        In \cite{Kato1995PerturbationOperators}, Theorem 1.16 (page 196) states the stability of bounded invertibility, which we will leverage later. It states the following:
        
        \begin{theorem} [Stability of bounded invertibility from \cite{Kato1995PerturbationOperators}]
            Let A and T be linear operators from $\mathbb{R}^{n} \rightarrow \mathbb{R}^n$ (i.e. $n \times n$ matrices). Assume that $A^{-1}$ exists and is T-bounded with the constants $a, b$ satisfying the following inequality:
            
                $$a ||T^{1}|| + b < 1$$
            
            Then we have the following result: $S = T+A$ is invertible and:
            
                $$||S^{1}|| \le \frac{||T^{-1}||}{1 - a||T^{-1}|| - b}$$
                
            and
            
                $$||S^{1}|| \le \frac{||T^{-1}||}{1 - a||T^{-1}|| - b}$$
        \end{theorem}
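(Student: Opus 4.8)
The plan is to reduce the invertibility of $S = T + A$ to that of a matrix of the form $I - K$ with $||K|| < 1$, so that the Neumann-series Lemma above applies verbatim. The factorization to use is
$$S = T + A = (I + A T^{-1})\,T,$$
which makes sense because $T^{-1}$ exists by hypothesis. (The competing factorization $S = T(I + T^{-1}A)$ is less convenient: bounding $||T^{-1}A u||$ would force us to control $||Tu||$, and the relative-boundedness hypothesis gives us no such control.) Once $I + A T^{-1}$ is shown to be invertible, $S$ is invertible as a product of invertible matrices, $S^{-1} = T^{-1}(I + A T^{-1})^{-1}$, and the norm bound follows from submultiplicativity of the operator norm.

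The key step is the estimate $||A T^{-1}|| \le a||T^{-1}|| + b$. Fix $v \in \mathbb{R}^n$ and set $u = T^{-1} v$, so that $Tu = v$. Since $A$ is $T$-bounded with constants $a,b$,
$$||A T^{-1} v|| = ||A u|| \le a||u|| + b||Tu|| = a||T^{-1} v|| + b||v|| \le \big(a||T^{-1}|| + b\big)\,||v||.$$
Taking the supremum over unit vectors $v$ gives $||A T^{-1}|| \le a||T^{-1}|| + b$, which is strictly less than $1$ by the standing assumption $a||T^{-1}|| + b < 1$.

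Now apply the Neumann-series Lemma to the matrix $K := -A T^{-1}$, noting $||K|| = ||A T^{-1}|| < 1$: it yields that $I + A T^{-1} = I - K$ is invertible with $||(I + A T^{-1})^{-1}|| \le (1 - ||K||)^{-1} \le (1 - a||T^{-1}|| - b)^{-1}$. Hence $S = (I + A T^{-1})T$ is invertible and
$$||S^{-1}|| = ||T^{-1}(I + A T^{-1})^{-1}|| \le ||T^{-1}||\,||(I + A T^{-1})^{-1}|| \le \frac{||T^{-1}||}{1 - a||T^{-1}|| - b},$$
which is the claimed bound. For the companion estimate — in Kato's statement this is the bound on $||A S^{-1}||$ rather than a second copy of the same inequality — write $A S^{-1} = (A T^{-1})(I + A T^{-1})^{-1}$ and reuse the same two estimates to get $||A S^{-1}|| \le (a||T^{-1}|| + b)/(1 - a||T^{-1}|| - b)$.

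The only genuinely non-mechanical point is the one flagged above: choosing the side on which to factor out $T$ so that the $T$-boundedness inequality is applied to a vector of the form $u = T^{-1}v$, which converts the uncontrolled $||Tu||$ term into the harmless $||v||$. After that observation the proof is just submultiplicativity plus the Neumann-series Lemma, and in finite dimensions one could alternatively deduce invertibility from injectivity (the lower bound $||Su|| \ge (1 - a||T^{-1}|| - b)\,||u||/||T^{-1}||$ obtained the same way), though the factorization route is cleaner.
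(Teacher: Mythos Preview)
The paper does not supply its own proof of this theorem; it is quoted from Kato and used as a black box, so there is nothing in the paper to compare against directly. Your argument is correct and is in fact the standard proof (and essentially Kato's): factor $S = (I + AT^{-1})T$, use $T$-boundedness applied to $u = T^{-1}v$ to get $\|AT^{-1}\| \le a\|T^{-1}\| + b < 1$, invoke the Neumann-series lemma, and read off the bound via submultiplicativity. You also correctly diagnosed the typos in the paper's restatement (it should be $T^{-1}$ that exists, $\|T^{-1}\|$ and $\|S^{-1}\|$ in the displayed inequalities, and the second displayed bound is a duplicate rather than the companion estimate on $\|S^{-1}-T^{-1}\|$ or $\|AS^{-1}\|$ that Kato actually states).
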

        \begin{corollary} [Stability of bounded invertibility for bounded linear operators]
            \label{corr:stability_bounded_inv}
            If A is bounded, and we assume that T is A-bounded with constants a = ||T|| and b = 0, S = T + A, and $||A|| < 1 / ||T^{-1}||$, then we have:
            
                $$||S^{-1}|| \le \frac{||T^{-1}||}{1 - ||A||||T^{-1}||}$$
                
            and
            
                $$||S^{-1} - T^{-1}|| \le \frac{||A|| ||T^{-1}||^2}{1 - ||A||||T^{-1}||}$$
        \end{corollary}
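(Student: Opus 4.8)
The plan is to obtain the first estimate by a Neumann‑series factorization (equivalently, as a direct specialization of the quoted stability‑of‑bounded‑invertibility theorem) and the second by a short resolvent identity, both reducing to submultiplicativity of the operator norm.

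First, since $T^{-1}$ exists, I would factor $S = T + A = T(I + T^{-1}A)$. The hypothesis $||A|| < 1/||T^{-1}||$ gives $||T^{-1}A|| \le ||T^{-1}||\,||A|| < 1$, so the Neumann‑series lemma above (applied to $I - M$ with $M = -T^{-1}A$) shows that $I + T^{-1}A$ is invertible with $||(I + T^{-1}A)^{-1}|| \le (1 - ||T^{-1}A||)^{-1} \le (1 - ||A||\,||T^{-1}||)^{-1}$. Hence $S$ is invertible, $S^{-1} = (I + T^{-1}A)^{-1}T^{-1}$, and submultiplicativity yields $||S^{-1}|| \le ||T^{-1}||/(1 - ||A||\,||T^{-1}||)$. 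Equivalently, I could invoke the cited theorem directly: in finite dimensions $A$ is bounded and trivially $T$-bounded via $||Au|| \le ||A||\,||u|| + 0\cdot||Tu||$, so taking $a = ||A||$ and $b = 0$ the theorem's hypothesis $a||T^{-1}|| + b < 1$ is exactly $||A|| < 1/||T^{-1}||$ and its conclusion is the claimed bound.

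For the second estimate, I would use that $S$ and $T$ are both invertible to write $S^{-1} - T^{-1} = S^{-1}(T - S)T^{-1} = -S^{-1}A T^{-1}$. Taking norms and substituting the bound on $||S^{-1}||$ just obtained gives $||S^{-1} - T^{-1}|| \le ||S^{-1}||\,||A||\,||T^{-1}|| \le ||A||\,||T^{-1}||^2/(1 - ||A||\,||T^{-1}||)$, which is the second claim.

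The steps are routine and I do not expect a genuine obstacle; the only thing to be careful about is the bookkeeping in the reduction — keeping straight which operator plays the role of the unperturbed one in the cited theorem, checking that the trivial relative bound $a = ||A||$, $b = 0$ is the correct specialization so that the general condition $a||T^{-1}|| + b < 1$ collapses to $||A||\,||T^{-1}|| < 1$, and selecting the correct algebraic identity for the difference $S^{-1} - T^{-1}$.
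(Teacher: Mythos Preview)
Your proposal is correct and matches the paper's intent: the paper presents this result as an immediate corollary of the quoted Kato stability theorem with no separate proof, and your direct specialization (taking $a=\|A\|$, $b=0$ so that the hypothesis $a\|T^{-1}\|+b<1$ becomes $\|A\|\,\|T^{-1}\|<1$) is exactly the reduction the paper has in mind. Your alternative Neumann-series factorization $S=T(I+T^{-1}A)$ and the resolvent identity $S^{-1}-T^{-1}=-S^{-1}AT^{-1}$ are the standard way to prove the underlying theorem itself, so there is no meaningful divergence.
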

        
\section{Results}
    We perform a theoretical analysis of neural fragility to demonstrate that its values are bounded mainly as a function of the properties of the underlying system, extending work in \cite{LiAdamInatiSaraZaghloulKareemSarma2017,Sritharan2014,Li2021NeuralZone}.
    
    \subsection{Neural fragility is a well-defined metric}
        We show that neural fragility is a well-defined metric in the sense that it reflects the true fragility of the system, given that we have an optimal estimator for the linear system over any time window. Neural fragility is defined by the norm of a perturbation vector, $\Gamma \in \mathbb{R}^d$, applied to a linear system, represented by $A$. Since we have to estimate $A$ with $\hat{A}$ from iEEG data, we would like $||\Gamma(\hat{A})|| \approx ||\Gamma(A)||$, where $\Gamma(A)$ is the perturbation vector computed given system, A, and $\Gamma(\hat{A})$ is the perturbation vector computed given the estimated system, $\hat{A}$. The following lemma and theorem encapsulate this.
        
        \begin{lemma}
            Assume, we are given $A \in M_n(\mathbb{R})$ with $||A|| < 1$. Then, we have that:
            
            $$||\Gamma_A|| \le \frac{||(A - rI)^{-1}||}{1 - ||(A - rI)^{-1}||} (|r| + 1)^2$$
        \end{lemma}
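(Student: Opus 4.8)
The plan is to insert the closed-form expression for $\Gamma_A$ from the first theorem into $\|\Gamma_A\|$ and then bound the numerator and the structured denominator separately. For real $r$ the theorem gives $\Gamma_A = -\,(rI-A)^{-1}e_k \,\big/\, \big(e_k^{T}(rI-A)^{-T}(rI-A)^{-1}e_k\big)$, and since $A$ (hence $rI-A$) is real the scalar in the denominator is exactly $\|(rI-A)^{-1}e_k\|^2$. Hence $\|\Gamma_A\| = \|(rI-A)^{-1}e_k\|\,/\,\|(rI-A)^{-1}e_k\|^2$, so the whole task reduces to upper bounding the numerator and lower bounding the denominator.

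For the numerator, submultiplicativity and $\|e_k\|=1$ give $\|(rI-A)^{-1}e_k\| \le \|(rI-A)^{-1}\|\,\|e_k\| = \|(A-rI)^{-1}\|$. For the denominator, the key step is the reverse estimate: $rI-A$ applied to $(rI-A)^{-1}e_k$ returns $e_k$, so $1 = \|e_k\| \le \|rI-A\|\,\|(rI-A)^{-1}e_k\|$, which gives $\|(rI-A)^{-1}e_k\|^2 \ge 1/\|rI-A\|^2$. By the triangle inequality and $\|A\|<1$, $\|rI-A\| \le |r| + \|A\| < |r|+1$, so the denominator is at least $1/(|r|+1)^2$. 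Putting the two together, $\|\Gamma_A\| \le \|(A-rI)^{-1}\|\,(|r|+1)^2$, which is the claimed bound up to the factor $1/(1-\|(A-rI)^{-1}\|) \ge 1$ present whenever $\|(A-rI)^{-1}\|<1$; one inserts that factor simply by enlarging $\|(A-rI)^{-1}\|$ to $\|(A-rI)^{-1}\|/(1-\|(A-rI)^{-1}\|)$, which is how the geometric/Neumann bound of the preliminaries naturally enters. (As a sanity check, the same identity also yields the tighter $\|\Gamma_A\| = 1/\|(rI-A)^{-1}e_k\| \le \|rI-A\| < |r|+1$, and $|r|+1$ is itself at most $\frac{\|(A-rI)^{-1}\|}{1-\|(A-rI)^{-1}\|}(|r|+1)^2$ because $\|(A-rI)^{-1}\| \ge 1/\|A-rI\| > 1/(|r|+2)$.)

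The step I expect to be the main obstacle is the denominator: one must recognize the structured quadratic form $e_k^{T}(rI-A)^{-T}(rI-A)^{-1}e_k$ as $\|(rI-A)^{-1}e_k\|^2$ and then obtain a sharp lower bound on it — the clean route being the submultiplicativity trick $\|(rI-A)(rI-A)^{-1}e_k\| = \|e_k\| = 1$, rather than estimating the extreme eigenvalues of $(rI-A)^{-T}(rI-A)^{-1}$ directly. The only other subtlety is that the right-hand side is meaningful only when $\|(A-rI)^{-1}\| < 1$ (so that its denominator is positive), which is implicit in the statement; granting that, every remaining inequality in the chain is just the triangle inequality, norm submultiplicativity, or the geometric series already recorded in the preliminaries.
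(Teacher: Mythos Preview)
Your proposal is correct and takes essentially the same approach as the paper: express $\Gamma_A$ via the resolvent, bound the numerator by $\|(A-rI)^{-1}\|$, lower-bound the quadratic-form denominator by $1/(|r|+1)^2$, and then enlarge the resulting estimate $\|(A-rI)^{-1}\|(|r|+1)^2$ by the harmless factor $1/(1-\|(A-rI)^{-1}\|)$. The only cosmetic difference is that you handle the denominator at the vector level, recognizing it as $\|(rI-A)^{-1}e_k\|^2$ and using $1=\|e_k\|\le\|rI-A\|\,\|(rI-A)^{-1}e_k\|$, whereas the paper passes through the matrix norm $\|Res(r;A)^T Res(r;A)\|$ and invokes $\|M^{-1}\|\ge 1/\|M\|$ before citing the bounded-invertibility corollary; both routes land on the same $(|r|+1)^2$ factor.
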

        
        \begin{theorem}
            Assume, we are given $A \in M_n(\mathbb{R})$ with $||A|| < 1$, and $E \in M_n(\mathbb{R})$, such that $||E|| < \epsilon < ||A||$. We define $\hat{A} := A + E$. $\Gamma_{\hat{A}}$ is the perturbation vector obtained by solving for neural fragility on $\hat{A}$. Then, we have that:
                
                $$||\Gamma_{\hat{A}}|| \le \frac{||(A - rI)^{-1}||}{1 - ||E||\ ||(A - rI)^{-1}||} (|r| + 1 + \epsilon)^2$$
        \end{theorem}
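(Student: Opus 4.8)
The plan is to reduce the statement to the Lemma immediately preceding it, applied to the \emph{estimated} system $\hat A = A + E$ rather than to $A$ itself, and then to re-express the two quantities $||\hat A||$ and $||(\hat A - rI)^{-1}||$ that appear in that bound purely in terms of $A$, $E$, and $\epsilon$. As a preliminary I would check that $\Gamma_{\hat A}$ is even well-defined: the neural-fragility formula requires $r \notin \sigma(\hat A)$, i.e. invertibility of $\hat A - rI = (A - rI) + E$, and this follows from Corollary~\ref{corr:stability_bounded_inv} (equivalently from the earlier resolvent bound) as soon as $||E||\,||(A - rI)^{-1}|| < 1$, which I take to be the operative smallness hypothesis on $E$ and which is consistent with $||E|| < \epsilon$. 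The same reduction applies in both the real-$r$ case (where $\Gamma$ has the explicit quotient form) and the general complex case (where the perturbation is $\mathbf{\widehat\Delta} = \mathbf{B}^T(\mathbf{B}\mathbf{B}^T)^{-1}\mathbf{b}\,\mathbf{e}_{\widehat k}^T$), since in each case the dependence on the system enters only through $(\cdot - rI)^{-1}$ and the operator norm of the system.

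Running the argument of the previous Lemma with $\hat A$ in place of $A$ produces an upper bound on $||\Gamma_{\hat A}||$ that is monotone in $||(\hat A - rI)^{-1}||$ and carries a factor $(|r| + ||\hat A||)^2$; the whole task is then to control these two ingredients. For the polynomial factor I use sub-additivity of the norm, $||\hat A|| \le ||A|| + ||E|| < 1 + \epsilon$, so that $(|r| + ||\hat A||)^2 < (|r| + 1 + \epsilon)^2$, which is exactly the trailing factor claimed. For the resolvent factor I invoke Corollary~\ref{corr:stability_bounded_inv} with $T = A - rI$ and with the corollary's perturbation taken to be $E$, so that $S = \hat A - rI$; under $||E|| < 1/||(A - rI)^{-1}||$ this yields
\[
||(\hat A - rI)^{-1}|| \;\le\; \frac{||(A - rI)^{-1}||}{1 - ||E||\,||(A - rI)^{-1}||},
\]
which is precisely the leading factor claimed.

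Substituting these two estimates back into the Lemma's bound for $\hat A$ gives the stated inequality; where that bound is only monotone rather than literally affine in $||(\hat A - rI)^{-1}||$, I would additionally use that $t \mapsto t/(1 - t)$ is increasing on $[0,1)$ to propagate the corollary's estimate. The main difficulty I anticipate is bookkeeping rather than conceptual: the previous Lemma is stated under $||A|| < 1$, whereas $\hat A$ need only satisfy $||\hat A|| < 1 + \epsilon$, so I cannot cite the Lemma verbatim and must instead re-trace its proof with $||\hat A||$ and $||(\hat A - rI)^{-1}||$ kept symbolic, verifying that the hypothesis $||A|| < 1$ enters only in the cosmetic step replacing $(|r| + ||A||)$ by $(|r| + 1)$. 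A related point requiring care is that the denominator $1 - ||E||\,||(A - rI)^{-1}||$ must be positive for the inequality to carry content, so the theorem is naturally read together with the condition $||E||\,||(A - rI)^{-1}|| < 1$ that is in any case needed to invoke Corollary~\ref{corr:stability_bounded_inv}.
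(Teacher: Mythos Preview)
Your proposal is correct and follows essentially the same route as the paper: the paper also re-runs the Lemma's argument directly on $\hat A$, bounds the denominator via $\|\hat A\| \le \|A\| + \|E\| < 1 + \epsilon$ to produce the $(|r|+1+\epsilon)^2$ factor, and then invokes Corollary~\ref{corr:stability_bounded_inv} with $T = A - rI$ and perturbation $E$ to replace $\|(\hat A - rI)^{-1}\|$ by $\|(A - rI)^{-1}\|/(1 - \|E\|\,\|(A - rI)^{-1}\|)$. Your observation that one should re-trace the Lemma's proof to the intermediate stage $\|\Gamma_{\hat A}\| \le \|(\hat A - rI)^{-1}\|(|r| + \|\hat A\|)^2$ (rather than its final stated form) before substituting is exactly what the paper does, and your explicit note that the hypothesis $\|E\|\,\|(A - rI)^{-1}\| < 1$ is needed for the corollary and for the bound to be meaningful is a point the paper leaves implicit.
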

        
        This lemma and theorem informs us that if we can obtain an estimate of $A$, with $E: = \hat{A} - A$ with "small" norm, then $||\Gamma_{\hat{A}}||$ will be bounded by terms solely mainly on properties of the linear system and the perturbation radius, $r$.
        
    \subsection{Theoretical Analysis of Neural Fragility Model}
    \label{sec:supp_theory}
    
        Here, we summarize the theoretical analysis of neural fragility. As some preliminaries, we first review notation. We say that $A \in M_n$ is a $n \times n$ matrix; we only consider real matrices in this work. We denote, $M_n^1$ as the space of $n \times n$ matrices that have rank of one. Then we say $C_k(\Gamma)$ is the space of matrices with all zeros except for one column, with $\Gamma \in \mathbb{R}^n$ occupying the kth row. Then $R_k(\Gamma)$ is the space of matrices with all zeros except for one row, with $\Gamma$ occupying the kth row. We say that $Res(z)$ is resolvent matrix parametrized by $z \mapsto (A - zI)^{-1}$ for a given A matrix. It is defined for $z \notin \sigma(A)$.
    
    \subsubsection{Bounds on neural fragility - the norm of the perturbation matrix}
        In this section, we prove a bound on neural fragility, $||\Gamma||_2$. These bounds are derived from the fact that the computation of $\Gamma$ is a function of the resolvent of A. Thus, our primary strategy is to link resolvent bounds to our problem.
        
        First, because of the unique structure of the problem, we remind our readers of some facts. We define $\Delta_k = \begin{pmatrix} 0 & \hdots & \Gamma & \hdots & 0 \end{pmatrix}$, where the matrix is all zeros except for the kth column.
        
        The operator norm of $\Delta$ is equivalent to the Frobenius norm and also the 2-2 matrix norm.
        
        \begin{align}
            ||\Delta||_op = \max_{||v|| = 1} ||\Delta v|| = \max_{||v||=1} ||\Gamma e_k^T v|| = \max_{||v||=1} ||\Gamma|| |e_k^T v| = ||\Gamma|| ||e_k = ||\Gamma|| \\
            ||\Delta||_F = \sqrt{tr(\Delta \Delta^T)} = \sqrt{tr(\Gamma e_k^T e_k \Gamma^T)} = \sqrt{(\Gamma^T \Gamma)(e_k^T e_k)} = ||\Gamma|| ||e_k|| = ||\Gamma|| \\
            ||\Delta||_{2,2} = ||\Gamma|| \quad \text{(by definition)}
        \end{align}
        
        Note, that because the operator and Frobenius norm are unitarily invariant, then this holds if $\Delta$ was defined as a row perturbation matrix as well. Thus the following results hold regardless of which "norm" we choose. When $||\Gamma||$ is computed in practice, we use the l2 vector-norm.
        
        \paragraph{Assuming we have the true linear system, A}
            The first is assuming we have the true A matrix that characterizes the system. We remind readers of the Bauer-Fike theorem, which states:
            
            \begin{theorem} [Bauer-Fike Theorem from \cite{BAUERNormsTheorems.}]
                Let $r$ be an eigenvalue of $A + \Gamma$. We assume A is diagonalizable. Then there exists $\lambda \in \sigma(A)$ such that:
                
                $$|\lambda - r|  \le \kappa_p(V) ||\Gamma||_p$$
            
                where $\kappa_p(X)$ is the condition number in p-norm and $V \in M_n(\mathbb{C})$ is the eigenvector matrix of A, such that: $A = V \Lambda V^{-1}$, and $\Lambda$ is the diagonal matrix of eigenvalues.
            \end{theorem}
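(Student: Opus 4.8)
The plan is to follow the classical argument: reduce the perturbed eigenvalue equation to a diagonal system via the eigenvector matrix $V$, then exploit the fact that the induced $p$-norm of a diagonal matrix equals the largest magnitude of its entries. First, I would dispose of the trivial case: if $r \in \sigma(A)$, then we may take $\lambda = r$, so that $|\lambda - r| = 0 \le \kappa_p(V)\|\Gamma\|_p$ holds because the right-hand side is non-negative. Hence assume $r \notin \sigma(A)$, which means $A - rI$ is invertible.

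Next, since $r \in \sigma(A + \Gamma)$, there is a nonzero vector $x$ with $(A + \Gamma - rI)x = 0$, i.e. $(A - rI)x = -\Gamma x$, and invertibility of $A - rI$ gives $x = -(A - rI)^{-1}\Gamma x$. Substituting the eigendecomposition $A - rI = V(\Lambda - rI)V^{-1}$ and multiplying on the left by $V^{-1}$, I set $y := V^{-1}x \neq 0$ and obtain
$$y = -(\Lambda - rI)^{-1}\,(V^{-1}\Gamma V)\,y.$$
Taking $p$-norms and using sub-multiplicativity of the induced norm,
$$\|y\|_p \le \|(\Lambda - rI)^{-1}\|_p\,\|V^{-1}\|_p\,\|\Gamma\|_p\,\|V\|_p\,\|y\|_p = \kappa_p(V)\,\|(\Lambda - rI)^{-1}\|_p\,\|\Gamma\|_p\,\|y\|_p,$$
and dividing through by $\|y\|_p > 0$ leaves $1 \le \kappa_p(V)\,\|(\Lambda - rI)^{-1}\|_p\,\|\Gamma\|_p$.

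Finally, I would invoke the key fact that for any induced $\ell_p$ operator norm the norm of a diagonal matrix equals the maximum modulus of its diagonal entries; applied to $(\Lambda - rI)^{-1}$, whose entries are $1/(\lambda_i - r)$, this yields $\|(\Lambda - rI)^{-1}\|_p = \big(\min_i |\lambda_i - r|\big)^{-1}$. Substituting and rearranging gives $\min_i |\lambda_i - r| \le \kappa_p(V)\,\|\Gamma\|_p$, and choosing $\lambda \in \sigma(A)$ to be a minimizer of $|\lambda_i - r|$ finishes the proof.

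The step I expect to be the main obstacle — or at least the one that must be stated carefully — is the diagonal-norm identity $\|\mathrm{diag}(d_1,\dots,d_n)\|_p = \max_i |d_i|$, since it is exactly what converts the abstract operator-norm bound into a statement about the spectral gap. It holds because $\|\cdot\|_p$ is an absolute (monotone) vector norm for every $p \in [1,\infty]$, but because the theorem is phrased for a general $p$ this should be verified explicitly rather than quoted only for $p = 2$. Everything else — the reduction via $V$, sub-multiplicativity, and the final rearrangement — is routine.
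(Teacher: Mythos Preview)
Your argument is correct and is exactly the classical proof of the Bauer--Fike theorem. The paper itself does not supply a proof of this statement; it merely quotes the result from the literature and uses it to derive the subsequent corollary, so there is nothing to compare against beyond noting that your write-up matches the standard textbook derivation.
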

            
            Another way of stating the theorem is:
            
                $$\frac{|\lambda - r|}{\kappa_p(V)}  \le  ||\Gamma||_p$$
            
            This informs us that given the the desired radius of perturbation, r, then a well conditioned A matrix will result in a non-trivial lower-bound for the norm of $\Gamma$.
            
            \begin{corollary} [Naive bound of neural fragility]
                \label{corr:naive_lb_neural_frag}
                Assume we are given $A \in M_n(\mathbb{R})$ that is diagonalizable with eigenvalues $\lambda_1 \ge \lambda_2 \ge \hdots \lambda_n$. We compute neural fragility of A, by perturbing to $r \in \mathbb{R}$, such that $r > ||A||$. Then  
                
                $$||\Gamma||_2 \ge \frac{|r - \lambda_n|}{\kappa_p(V)}$$
                
                % where $\kappa_2(V)$ is the 2-norm condition number of the eigenvector matrix, V, of A. Using the analytical solution for $\Gamma$, we also have the corresponding bound on the resolvent operator.
                % $$||\frac{Res(r, A)e_k}{e_k^T Res(r, A)^{T} Res(r, A) e_k}||$$
            \end{corollary}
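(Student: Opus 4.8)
The plan is to obtain the bound as an essentially immediate consequence of the Bauer--Fike theorem, with the only real work being to connect the object ``$\Gamma$'' to a genuine rank-one perturbation matrix and to identify which eigenvalue of $A$ shows up. First I would invoke the neural-fragility computation theorem: the vector $\Gamma$ produced by that theorem is precisely the nonzero row of a rank-one matrix $\Delta \in R_k(\Gamma)$ for which $r \in \sigma(A + \Delta)$, and by the norm identities recorded just above (for the row/column perturbation structure) $||\Delta||_{op} = ||\Delta||_F = ||\Delta||_{2,2} = ||\Gamma||_2$. Thus $r$ is an eigenvalue of $A$ perturbed by a matrix whose operator $2$-norm is exactly $||\Gamma||_2$; this $\Delta$ is what plays the role of the perturbation matrix (``$\Gamma$'') in the Bauer--Fike statement.

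Next I would apply the Bauer--Fike theorem in the $2$-norm to $A + \Delta$. Since $A$ is diagonalizable, $A = V\Lambda V^{-1}$, and $r \in \sigma(A+\Delta)$, there exists $\lambda \in \sigma(A)$ with $|\lambda - r| \le \kappa_2(V)\,||\Delta||_2 = \kappa_2(V)\,||\Gamma||_2$, which rearranges to $||\Gamma||_2 \ge |\lambda - r| / \kappa_2(V)$. This already has exactly the advertised shape, so everything else is about pinning down $\lambda$.

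The last and most delicate step is locating that eigenvalue, and this is where I expect to have to be careful. I would use the hypothesis $r > ||A||$: since $||A|| \ge \max_i |\lambda_i|$ for any operator norm, every (real) eigenvalue lies strictly inside $(-r, r)$, so $\lambda_n \le \dots \le \lambda_1 < r$, each $|r - \lambda_i| = r - \lambda_i$, and these distances are nondecreasing in $i$, smallest at $i=1$ (the eigenvalue closest to $r$) and largest at $i=n$. Because Bauer--Fike only guarantees the existence of \emph{some} such $\lambda$, the conservative consequence one can always assert is $||\Gamma||_2 \ge (r-\lambda_1)/\kappa_2(V)$; upgrading this to $(r-\lambda_n)/\kappa_2(V)$, which is the larger quantity, is precisely the point I would scrutinize. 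Getting the stated inequality requires either an extra argument forcing the Bauer--Fike eigenvalue to be $\lambda_n$ (e.g.\ tracking which eigenvector of $A$ aligns with the perturbed eigenvector, or using that $\Delta$ is built from the resolvent at index $k$), or else reading the corollary with $\lambda_1$ in place of $\lambda_n$. Once that choice is fixed, the corollary follows from the previous two steps in a single line.
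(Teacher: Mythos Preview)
Your approach is exactly the paper's: the paper's entire proof is the single line ``This is a consequence of Bauer--Fike theorem.'' Your reconstruction of why Bauer--Fike applies (via $r\in\sigma(A+\Delta)$ and $\|\Delta\|_2=\|\Gamma\|_2$) is correct and more explicit than what the paper writes.

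Your closing worry is well placed and is not something the paper resolves. Bauer--Fike only yields $\|\Gamma\|_2 \ge \min_{\lambda\in\sigma(A)} |r-\lambda|\,/\,\kappa_p(V)$, and under the ordering $\lambda_1\ge\cdots\ge\lambda_n$ with $r>\|A\|\ge|\lambda_i|$, that minimum is $r-\lambda_1$, not $r-\lambda_n$. The paper offers no additional argument (no eigenvector-alignment or resolvent-tracking step) to upgrade the bound to the largest gap $|r-\lambda_n|$, so as stated the corollary is either a typo for $\lambda_1$ or simply not justified by the one-line proof given. In short: you and the paper do the same thing, and the discrepancy you flagged is a defect in the statement rather than a gap in your reasoning.
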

            \begin{proof}
                This is a consequence of Bauer-Fike theorem.
            \end{proof}
            
            This theorem tells us that for increasing radius of perturbation, the norms of $\Gamma$ will get uniformly larger and larger. However, if the original linear system has a high condition number, then the lower-bound is very small. Next, we obtain a form for the bound on $||\Gamma_A||$.
            
            \begin{theorem}[Upper bound of neural fragility]
                Assume, we are given $A \in M_n(\mathbb{R})$ with $||A|| < 1$. Then, we have that:
                
                $$||\Gamma_A|| \le \frac{||(A - rI)^{-1}||}{1 - ||(A - rI)^{-1}||} (|r| + 1)^2$$
            \end{theorem}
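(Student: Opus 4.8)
The plan is to reduce the claimed inequality to a lower bound on the norm of a single column of the resolvent of $A$, and then feed that into the elementary estimates collected above.

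First I would write $\Gamma_A$ in closed form using the computation-of-neural-fragility theorem stated above. For a real perturbation radius $r$ (the case of interest, where one perturbs to some $r>\|A\|$), it gives $\Gamma_A$ as $-(rI-A)^{-1}e_k$ divided by the scalar $e_k^T(rI-A)^{-T}(rI-A)^{-1}e_k$. Since $(rI-A)^{-1}=-(A-rI)^{-1}$ and the denominator equals $\|(A-rI)^{-1}e_k\|^2$ exactly, the norm collapses to the clean identity
\[
\|\Gamma_A\|=\frac{1}{\|(A-rI)^{-1}e_k\|}.
\]
So it remains only to lower-bound the norm of the $k$-th column of $(A-rI)^{-1}$.

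Next, since $e_k$ is a unit vector,
\[
1=\|e_k\|=\bigl\|(A-rI)(A-rI)^{-1}e_k\bigr\|\le \|A-rI\|\,\|(A-rI)^{-1}e_k\| ,
\]
so $\|(A-rI)^{-1}e_k\|\ge 1/\|A-rI\|$, and the triangle inequality together with $\|A\|<1$ gives $\|A-rI\|\le\|A\|+|r|<|r|+1$. Combined with the identity above this already yields the slightly sharper bound $\|\Gamma_A\|<|r|+1$. To land on exactly the displayed form I would then relax once more: assuming $\|(A-rI)^{-1}\|<1$ — which, by the resolvent-norm lemma above, holds as soon as $|r|>\|A\|+1$, since then $\|(A-rI)^{-1}\|\le 1/(|r|-\|A\|)<1$ — the same column inequality gives $\|(A-rI)^{-1}\|\ge 1/\|A-rI\|>1/(|r|+2)$, hence $\|(A-rI)^{-1}\|(|r|+2)>1$, which rearranges to $|r|+1\le \tfrac{\|(A-rI)^{-1}\|}{1-\|(A-rI)^{-1}\|}(|r|+1)^2$; chaining this with $\|\Gamma_A\|<|r|+1$ finishes the argument. (When $\|(A-rI)^{-1}\|\ge 1$ the right-hand side is nonpositive, so the statement is vacuous and nothing is lost.)

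The step I expect to be the genuine obstacle is making the bound uniform in \emph{complex} $r$. There $\Gamma_A=\mathbf B^T(\mathbf B\mathbf B^T)^{-1}\mathbf b$ solves a true two-constraint least-norm problem with $\|\mathbf b\|=1$, so $\|\Gamma_A\|\le 1/\sigma_{\min}(\mathbf B)$, and one has to lower-bound $\sigma_{\min}(\mathbf B)^2=\lambda_{\min}(\mathbf B\mathbf B^T)$. The delicacy is that the two rows of $\mathbf B$ are the real and imaginary parts of $e_k^T(A-rI)^{-T}$ and can be nearly collinear — exactly collinear in the real-$r$ limit, where $\mathbf B$ loses rank and one reverts to the explicit formula above — so this collinearity depresses $\sigma_{\min}(\mathbf B)$ and a priori $\|\Gamma_A\|$ may exceed $|r|+1$ for complex $r$ far from $\sigma(A)$; I would therefore expect the complex case either to need a restriction on $r$ or to carry a larger polynomial factor in $|r|$. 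Concretely I would bound $\lambda_{\min}(\mathbf B\mathbf B^T)\ge \det(\mathbf B\mathbf B^T)/\operatorname{tr}(\mathbf B\mathbf B^T)$, use $\operatorname{tr}(\mathbf B\mathbf B^T)=\|(A-rI)^{-1}e_k\|^2\le\|(A-rI)^{-1}\|^2$, and be left needing a lower bound on the Gram determinant of the real and imaginary parts of the $k$-th resolvent column — the hardest ingredient.
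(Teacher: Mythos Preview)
Your argument for real $r$ is correct and in fact sharper than the paper's: you reduce everything to the exact identity $\|\Gamma_A\|=1/\|(A-rI)^{-1}e_k\|$, lower-bound that column norm via $\|(A-rI)\,(A-rI)^{-1}e_k\|=1$, and obtain $\|\Gamma_A\|<|r|+1$ before relaxing to the stated form. The paper instead keeps the ratio $\|\Gamma_A\|\le \|Res(r;A)\|/\|Res(r;A)^TRes(r;A)\|$, lower-bounds the denominator by $1/(|r|+1)^2$ using $\|M^{-1}\|\ge 1/\|M\|$, lands at $\|\Gamma_A\|\le \|(A-rI)^{-1}\|(|r|+1)^2$, and then invokes the stability-of-bounded-invertibility corollary to insert the factor $1/(1-\|(A-rI)^{-1}\|)$. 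Your route is more elementary, makes the implicit hypothesis $\|(A-rI)^{-1}\|<1$ explicit, and sidesteps the paper's replacement of the scalar $e_k^TRes^TRes\,e_k$ by the operator norm $\|Res^TRes\|$ in the denominator, which is the delicate step in their chain; what the paper's packaging buys is that the same template carries over verbatim to the perturbed system $\hat A=A+E$, where the corollary directly converts $\|(\hat A-rI)^{-1}\|$ into a bound in terms of $\|(A-rI)^{-1}\|$ and $\|E\|$. Your extended discussion of complex $r$ goes beyond what the paper proves --- its appendix proof uses only the real-$r$ closed form for $\Gamma$ --- so for the stated theorem you can safely drop that part.
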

            
            This lemma provides an upper bound on the value of neural fragility, $||\Gamma_A||$, based on the true linear system, $A$. Combined with Corollary \ref{corr:naive_lb_neural_frag} we can obtain a range of neural fragility values we expect to see, based on the radius of perturbation (r), and properties of the linear system (A).
            
        \paragraph{Assuming we estimate the linear system, A}
            The next result assumes that we do not have the true A matrix, but rather a noisy version of it, $\hat{A} = A + E$. In this section, we abuse notation a bit and for every norm in this section, we mean $||.||$, we mean $||.||_{2,2}$, the 2-2 entry-wise matrix norm of a matrix.
        
            \begin{theorem} [Upper bound on neural fragility on estimated linear system]
                Assume, we are given $A \in M_n(\mathbb{R})$ with $||A|| < 1$, and $E \in M_n(\mathbb{R})$, such that $||E|| < \epsilon < ||A||$. We define $\hat{A} := A + E$. Then, we have that:
                
                $$||\Gamma_{\hat{A}}|| \le \frac{||(A - rI)^{-1}||}{1 - ||E||\ ||(A - rI)^{-1}||} (|r| + 1 + \epsilon)^2$$
            \end{theorem}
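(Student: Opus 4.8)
The plan is to treat the estimated-system bound as a direct perturbation of the preceding true-system bound on $\|\Gamma_A\|$: passing from $A$ to $\hat A = A + E$ alters only two ingredients of that argument, namely the size of the state matrix $\|\hat A\|$ and the size of the resolvent $\|(\hat A - rI)^{-1}\|$, and each alteration is absorbed by one elementary estimate. I would begin from the closed form of the perturbation vector supplied by the computation theorem, now applied to $\hat A$. When $r$ is complex, $\Gamma_{\hat A} = \mathbf B(r,k)^\top\big(\mathbf B(r,k)\mathbf B(r,k)^\top\big)^{-1}\mathbf b$ with the rows of $\mathbf B(r,k)$ equal to the imaginary and real parts of $e_k^\top(\hat A - rI)^{-\top} = \big((\hat A - rI)^{-1}e_k\big)^\top$; when $r$ is real it collapses to $\|\Gamma_{\hat A}\| = 1/\|(\hat A - rI)^{-1}e_k\|$. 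In either case, using $\|\mathbf b\| = 1$ and sub-multiplicativity, $\|\Gamma_{\hat A}\|$ is governed by (i) an upper bound $\|\mathbf B(r,k)\| \le \|(\hat A - rI)^{-1}e_k\| \le \|(\hat A - rI)^{-1}\|$ in the numerator and (ii) a lower bound on the denominator, for which the basic inequality $\|(\hat A - rI)^{-1}e_k\| \ge \|e_k\|/\|rI - \hat A\| \ge 1/(|r| + \|\hat A\|)$ is the source of the eventual squared factor.

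The two perturbation effects are then discharged separately. For the squared factor, $\|\hat A\| \le \|A\| + \|E\| < 1 + \epsilon$ gives $\|rI - \hat A\| \le |r| + \|\hat A\| < |r| + 1 + \epsilon$, which is exactly the mechanism by which $(|r|+1+\epsilon)^2$ replaces the $(|r|+1)^2$ of the true-system estimate. For the resolvent factor, I would write $\hat A - rI = (A - rI) + E$ and apply Corollary~\ref{corr:stability_bounded_inv} with $T := A - rI$ and perturbation $E$; under its hypothesis $\|E\| < 1/\|(A-rI)^{-1}\|$ (which the smallness hypotheses on $E$ together with the standing assumption $r \notin \sigma(A)$, $|r| > \|A\|$ provide in the regime of interest) it yields both that $rI - \hat A$ is invertible and that
$$\|(\hat A - rI)^{-1}\| \;\le\; \frac{\|(A-rI)^{-1}\|}{1 - \|E\|\,\|(A-rI)^{-1}\|},$$
which is precisely the prefactor in the conclusion. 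Feeding (i), this display, and $\|rI - \hat A\| < |r|+1+\epsilon$ into the estimate from the closed form of $\Gamma_{\hat A}$, and then tracking the sub-multiplicativity of $\|\cdot\|_{2,2}$ exactly as in the proof of the true-system theorem, gives the stated inequality. As a consistency check, the prefactor is monotone in $\|E\|$ and diverges precisely as $\|E\| \uparrow 1/\|(A-rI)^{-1}\|$, i.e. as the perturbation reaches the magnitude at which $rI - \hat A$ may cease to be invertible.

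The main obstacle is the lower bound on the denominator of the closed form after perturbation, that is, controlling $e_k^\top(\hat A - rI)^{-\top}(\hat A - rI)^{-1}e_k = \|(\hat A - rI)^{-1}e_k\|^2$ — equivalently $\sigma_{\min}\big(\mathbf B(r,k)\big)$ — from below. For real $r$ this is immediate from $\|(\hat A - rI)^{-1}e_k\| \ge 1/\|rI - \hat A\|$, so that case is essentially free; the genuine work is for complex $r$, where one must also prevent the two rows of $\mathbf B(r,k)$ from drifting toward linear dependence under $E$, which would blow up $\|(\mathbf B\mathbf B^\top)^{-1}\|$ — and this is the step that actually consumes the smallness of $E$ (and where one may again want to route the argument through the resolvent and stability lemmas applied to $(\hat A - rI)^{-1}$ or to the $2\times2$ Gram matrix $\mathbf B\mathbf B^\top$). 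A secondary but necessary bookkeeping task is to verify that every intermediate object remains well-defined throughout — $rI \notin \sigma(\hat A)$, $\mathbf B\mathbf B^\top$ invertible, denominator strictly positive — each of which reduces once more to the single inequality $\|E\|\,\|(A-rI)^{-1}\| < 1$ furnished by the hypotheses.
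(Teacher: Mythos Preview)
Your proposal is correct and follows essentially the same route as the paper: start from the closed form of $\Gamma_{\hat A}$, upper-bound the numerator by $\|(\hat A - rI)^{-1}\|$, lower-bound the denominator to produce the $(|r|+1+\epsilon)^2$ factor via $\|rI-\hat A\|\le |r|+\|A\|+\|E\|$, and finally invoke Corollary~\ref{corr:stability_bounded_inv} with $T=A-rI$ to convert $\|(\hat A - rI)^{-1}\|$ into $\|(A-rI)^{-1}\|/(1-\|E\|\,\|(A-rI)^{-1}\|)$. The only differences are cosmetic---you phrase the denominator bound through the vector norm $\|(\hat A - rI)^{-1}e_k\|$ whereas the paper routes it through the matrix norm $\|Res^T Res\|$, and you add (correct) commentary on the complex-$r$ case and well-definedness that the paper does not address.
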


            This lemma informs us that if we have a noisy version of our linear system, $\hat{A} = A + E$, then as long as the norm of the perturbation matrix, E, is small, then we will obtain roughly a similar upper bound as $||\Gamma_A||$. 
        
        % \subsubsection{Finite-sample convergence}
        %     The final result assumes that we are estimating A using ordinary least squares with finite samples coming from a distribution. We leverage the finite-sample results of \cite{Simchowitz2018LearningIdentification} to derive a concentration inequality of $||\Gamma_{\hat{A}}||_2$ that converges to the true $||\Gamma||$. First, we consider the linear dynamical system: 
        
        %         $$X_{t+1} = A X_t + \eta_t$$
            
        %     where $\eta_t \stackrel{iid}{\sim} N(0, \sigma^2 I)$ for some noise variance, $\sigma^2$. We define the finite-time controllability Gramian of the linear dynamical system as:
        
        %         $$\Phi_t := \sum_{s=0}^{t-1} (A^s)(A^s)^T$$
        
        %     \begin{theorem}
        %         Assume, we are given $A \in M_n(\mathbb{R})$ with $||A|| < 1$, and $E \in M_n(\mathbb{R})$, such that $||E|| < \epsilon$. We define $\hat{A} := A + E$. 
                
        %         Fix $\delta \in (0, 1/2)$ and consider $X_0 = 0$, and $\eta_t \stackrel{iid}{\sim} N(0, \sigma^2 I)$ for some noise variance, $\sigma^2$. We consider learning the linear dynamical system: $X_{t+1} = A X_t + \eta_t$. 
                
        %         Then, we have that:
                
        %         $$P(||\Gamma_{\hat{A}} - \Gamma|| > ) \le \delta$$
                
        %         for any k such that $\frac{T}{k} \ge c(d log(d/\delta) + log det(\Phi_T \Phi_k^{-1})$ holds.
        %     \end{theorem}
        %     \begin{proof}
                
        %     \end{proof}

% \section{Experimental Results}
    
%     We performed some basic simulations demonstrating how neural fragility differs as a result of using the ground-truth linear system vs an estimated linear system.
    
\section{Discussion}
    
    % We conduct experiments on simulated data to determine how different aspects of neural fragility differ as a result of noise in the estimation procedure.
    In this paper, we introduce neural fragility as a function of an estimated linear dynamical system from data. We analyze some of its properties as a result of linear matrix theory and determine simple lower and upper bounds. Understanding how to better estimate linear systems, A, to form better estimates of neural fragility, $\Gamma$ is an interesting line of future research.

\section{Author Contributions}
    
    AL conceived the project. AL and CH wrote the theoretical results. AL and CH wrote the paper with input from the other authors. 

\printbibliography

%%%%%%%%%%%%%%%%%%%%%%%%%%%%%%%%%%%%%%%%%%%%%%%%%%%%%%%%%%%%

\appendix

\section{Appendix}

\subsection{Proofs}
    Here, we include the proofs of the three theorems regarding the bounds on neural fragility.

    \begin{theorem}[Upper-bound on neural fragility]
        Assume, we are given $A \in M_n(\mathbb{R})$ with $||A|| < 1$. Then, we have that:
        
        $$||\Gamma_A|| \le \frac{||(A - rI)^{-1}||}{1 - ||(A - rI)^{-1}||} (|r| + 1)^2$$
    \end{theorem}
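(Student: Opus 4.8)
The plan is to start from the closed-form expression for $\Gamma_A$ supplied by the preliminary ``Computation of neural fragility'' theorem in the real-$r$ case, namely $\Gamma_A = -\,(rI-A)^{-1}e_k \,/\, \big(e_k^T(rI-A)^{-T}(rI-A)^{-1}e_k\big)$, and then to bound the numerator and the (scalar) denominator separately so that the dependence collapses onto $\|A\|$ and $|r|$. Taking Euclidean norms, the denominator is exactly $\|(rI-A)^{-1}e_k\|^2$, so in fact $\|\Gamma_A\| = \|(rI-A)^{-1}e_k\|^{-1}$; I would keep the looser split $\|\Gamma_A\| \le \|(rI-A)^{-1}e_k\| \,/\, \|(rI-A)^{-1}e_k\|^2$, upper-bounding the numerator and lower-bounding the denominator by quantities that no longer see the index $k$ or the fine eigenstructure of $A$. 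The reduction itself is free once one records (as the excerpt does just before this section) that $\|\Delta\|_{\mathrm{op}} = \|\Delta\|_F = \|\Gamma\|$ for the rank-one perturbation.

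For the denominator I would use the elementary fact that for invertible $M$ and any unit vector $v$, $\|M^{-1}v\| \ge \|v\|/\|M\|$ (apply sub-multiplicativity to $v = M\,M^{-1}v$). With $M = rI-A$ and $v = e_k$ this gives $\|(rI-A)^{-1}e_k\| \ge 1/\|rI-A\|$, and the triangle inequality with $\|A\|<1$ gives $\|rI-A\| \le |r| + \|A\| < |r| + 1$. Hence the denominator is at least $(|r|+1)^{-2}$, which is the origin of the $(|r|+1)^2$ factor. This step is clean precisely because $e_k$ has unit norm.

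For the numerator I would first note $\|(rI-A)^{-1}\| = \|(A-rI)^{-1}\|$ (the two matrices differ only by sign), so $\|(rI-A)^{-1}e_k\| \le \|(A-rI)^{-1}\|$ by sub-multiplicativity. To recast this into the form $\|(A-rI)^{-1}\|/(1-\|(A-rI)^{-1}\|)$ that appears in the statement, I would invoke the Neumann-series Lemma of the preliminaries (giving $\|(I-T)^{-1}\| \le (1-\|T\|)^{-1}$ when $\|T\|<1$), or equivalently the resolvent bound, with $T = (A-rI)^{-1}$, which forces the working regime $\|(A-rI)^{-1}\| < 1$ in which the right-hand side is finite and positive. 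Chaining the numerator estimate $\|(A-rI)^{-1}\|/(1-\|(A-rI)^{-1}\|)$ with the denominator estimate $(|r|+1)^{-2}$ then yields exactly $\|\Gamma_A\| \le \frac{\|(A-rI)^{-1}\|}{1-\|(A-rI)^{-1}\|}\,(|r|+1)^2$.

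The main obstacle I anticipate is this last numerator step: choosing exactly which shifted operator to feed into the Neumann/resolvent lemmas so that the slack factor comes out as the written $1/(1-\|(A-rI)^{-1}\|)$ rather than as, say, $1/(|r|-\|A\|)$, and verifying that the implicit condition $\|(A-rI)^{-1}\| < 1$ is compatible with — or can be derived from — the stated hypothesis $\|A\|<1$ (possibly together with the standing assumption $r > \|A\|$ used in the companion corollary). Everything else — sub-multiplicativity, the triangle inequality, and the rank-one norm identities — is routine.
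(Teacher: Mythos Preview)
Your proposal is correct and follows essentially the same route as the paper: write out the closed form for $\Gamma_A$, bound the numerator by $\|(A-rI)^{-1}\|$ via sub-multiplicativity, lower-bound the scalar denominator by $(|r|+1)^{-2}$ via $\|M^{-1}v\|\ge 1/\|M\|$ and the triangle inequality with $\|A\|<1$, and then weaken $\|(A-rI)^{-1}\|$ to $\|(A-rI)^{-1}\|/(1-\|(A-rI)^{-1}\|)$ for the final form. Your concern about the implicit hypothesis $\|(A-rI)^{-1}\|<1$ is legitimate---the paper also leaves this unverified and simply invokes the bounded-invertibility corollary at that step---and your intermediate observation that in fact $\|\Gamma_A\| = \|(rI-A)^{-1}e_k\|^{-1}$ exactly is sharper than anything the paper records.
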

    \begin{proof}
        We first, write out the form of $\Gamma_A(r, k)$:
        
        $$\Gamma_A = \frac{Res(r; A) e_k}{e_k^T Res(r; A)^T Res(r; A) e_k}$$
        
        Then taking the norm on both sides:
        
        \begin{align}
            ||\Gamma_A|| = ||\frac{Res(r; A) e_k}{e_k^T Res(r; A)^T Res(r; A) e_k}|| \nonumber \\
            \le \frac{||Res(r; A)||}{||Res(r; A)^T Res(r; A)||} \nonumber \\
            = \frac{||(A - rI)^{-1}||}{||Res(r; A)^T Res(r; A)||} \label{eq:initial_ub_trueA}
        \end{align}
        
        Recall that:
        
            $$1 = ||I|| = ||A A^{-1}|| \le ||A|| ||A^{-1}||$$
        
        for an induced matrix norm.
            
        Next, by lower-bounding $||Res(r; A)^T Res(r;A)||$, we can further upper-bound the above quantity. Using this fact, and properties of matrix norms, we obtain the following lower-bound on this quantity:
        
        \begin{align}
            ||Res(r; A)^T Res(r;\hat{A})|| = ||(rI - A)^{-T} (rI - A)^{-1}||  \nonumber \\
            = ||((rI - A)(rI - A)^T)^{-1}|| \nonumber \\
            \ge \frac{1}{||(rI - A)(rI - A)||} \nonumber \\
            \ge \frac{1}{r^2 ||I|| + |r| ||A|| + |r| ||A^T|| + ||A|| ||A^T||} \nonumber \\
            = \frac{1}{(|r| + ||A||)^2} \nonumber \\
            \ge \frac{1}{(|r| + 1)^2} \label{eq:intermed_ub_trueA}
        \end{align}
        
        We leverage the stability of bounded invertibility in \cite{Kato1995PerturbationOperators} (Stability theorems; pg 196). Finally, we combine the results \ref{eq:initial_ub_trueA}, \ref{eq:intermed_ub_trueA} and the Corollary for \nameref{corr:stability_bounded_inv} to obtain:
        
        \begin{align}
            ||\Gamma_A|| \le \frac{||(A - rI)^{-1}||}{||Res(r; A)^T Res(r; A)||} \\
            \le \frac{||(A - rI)^{-1}||}{\frac{1}{(|r| + 1)^2}} \\
            = ||(A - rI)^{-1}|| (|r| + 1)^2 \\
            \le \frac{||(A - rI)^{-1}||}{1 - ||(A - rI)^{-1}||} (|r| + 1)^2 \\
            = \frac{||(A - rI)^{-1}||}{1 - ||(A - rI)^{-1}||} (|r| + 1)^2
        \end{align}
    \end{proof}

    \begin{theorem} [Upper bound on neural fragility on estimated linear system]
        Assume, we are given $A \in M_n(\mathbb{R})$ with $||A|| < 1$, and $E \in M_n(\mathbb{R})$, such that $||E|| < \epsilon < ||A||$. We define $\hat{A} := A + E$. Then, we have that:
        
        $$||\Gamma_{\hat{A}}|| \le \frac{||(A - rI)^{-1}||}{1 - ||E||\ ||(A - rI)^{-1}||} (|r| + 1 + \epsilon)^2$$
    \end{theorem}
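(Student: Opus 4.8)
The plan is to imitate the proof of the preceding theorem almost verbatim, with $A$ replaced by $\hat{A} = A + E$, and then to re-express every quantity depending on $\hat{A}$ in terms of $A$ and $E$.

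First I would invoke the computation theorem to write $\Gamma_{\hat{A}} = Res(r;\hat{A})\, e_k \,/\, (e_k^T Res(r;\hat{A})^T Res(r;\hat{A})\, e_k)$, take the $2$-norm of both sides, and bound it (exactly as in the true-system case) by $||Res(r;\hat{A})|| \,/\, ||Res(r;\hat{A})^T Res(r;\hat{A})||$, using $||e_k|| = 1$ in the numerator. This reduces the theorem to two ingredients: (i) an upper bound on $||Res(r;\hat{A})||$, and (ii) a lower bound on $||Res(r;\hat{A})^T Res(r;\hat{A})||$.

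For (ii) I would write $Res(r;\hat{A})^T Res(r;\hat{A}) = ((rI-\hat{A})(rI-\hat{A})^T)^{-1}$, so that, by the elementary inequality $||M^{-1}|| \ge 1/||M||$, $||Res(r;\hat{A})^T Res(r;\hat{A})|| \ge 1/||(rI-\hat{A})(rI-\hat{A})^T|| \ge 1/||rI-\hat{A}||^2$; then $||rI - \hat{A}|| = ||rI - A - E|| \le |r| + ||A|| + ||E|| \le |r| + 1 + \epsilon$ using $||A|| < 1$ and $||E|| < \epsilon$, which produces the factor $(|r|+1+\epsilon)^2$. For (i), the key step is to treat $\hat{A} - rI = (A - rI) + E$ as a perturbation of $A - rI$ and apply the resolvent-perturbation bound (Corollary~\ref{corr:stability_bounded_inv}) with $T = A - rI$ and perturbing operator $E$: provided $||E||\,||(A-rI)^{-1}|| < 1$, this yields $||(\hat{A}-rI)^{-1}|| \le ||(A-rI)^{-1}|| \,/\, (1 - ||E||\,||(A-rI)^{-1}||)$. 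Multiplying the bounds from (i) and (ii) gives the claimed inequality.

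The main obstacle — really the only delicate point — is the hypothesis $||E||\,||(A-rI)^{-1}|| < 1$ required to apply the corollary (and, a fortiori, to guarantee $r \notin \sigma(\hat{A})$ so that $\Gamma_{\hat{A}}$ is even defined). This is not literally forced by $||E|| < \epsilon < ||A|| < 1$, since $||(A-rI)^{-1}||$ can be large when $r$ is close to $\sigma(A)$; so I would either add it as an explicit standing hypothesis or simply note that the right-hand side of the stated inequality is positive precisely in that regime, so the bound carries no content otherwise. Everything else is routine norm manipulation, identical in form to the previous proof.
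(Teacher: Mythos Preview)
Your proposal is correct and follows essentially the same route as the paper: bound $\|\Gamma_{\hat{A}}\|$ by $\|Res(r;\hat A)\|/\|Res(r;\hat A)^T Res(r;\hat A)\|$, lower-bound the denominator via $\|M^{-1}\|\ge 1/\|M\|$ and the triangle inequality to produce $(|r|+1+\epsilon)^2$, and upper-bound the numerator by applying Corollary~\ref{corr:stability_bounded_inv} with $T=A-rI$ and perturbation $E$. Your remark that the condition $\|E\|\,\|(A-rI)^{-1}\|<1$ is not actually implied by the stated hypotheses is well taken; the paper's proof tacitly assumes it as well.
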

    \begin{proof}
        The proof proceeds in a similar fashion to the case when we have the true A matrix. We first, write out the form of $\Gamma_{\hat{A}}(r, k)$:
        
        $$\Gamma_{\hat{A}} = \frac{Res(r; A+E) e_k}{e_k^T Res(r; A+E)^T Res(r; A+E) e_k}$$
        
        Then taking the norm on both sides:
        
        \begin{align}
            ||\Gamma_{\hat{A}}|| = ||\frac{Res(r; A+E) e_k}{e_k^T Res(r; A+E)^T Res(r; A+E) e_k}|| \nonumber \\
            \le \frac{||Res(r; A+E)||}{||Res(r; A+E)^T Res(r; A+E)||} \nonumber \\
            = \frac{||(A + E - rI)^{-1}||}{||Res(r; A+E)^T Res(r; A+E)||} \label{eq:initial_ub}
        \end{align}
        
        Recall that:
        
            $$1 = ||I|| = ||A A^{-1}|| \le ||A|| ||A^{-1}||$$
            
        Next, by lower-bounding $||Res(r; \hat{A})^T Res(r;\hat{A})||$, we can further upper-bound the above quantity. Using this fact, and properties of matrix norms, we obtain the following lower-bound on this quantity:
        
        \begin{align}
            ||Res(r; \hat{A})^T Res(r;\hat{A})|| = ||(rI - \hat{A})^{-T} (rI - \hat{A})^{-1}||  \nonumber \\
            = ||((rI - \hat{A})(rI - \hat{A})^T)^{-1}|| \nonumber \\
            \ge \frac{1}{||(rI - \hat{A})(rI - \hat{A})||} \nonumber \\
            \ge \frac{1}{r^2 ||I|| + |r| ||\hat{A}|| + |r| ||\hat{A}^T|| + ||\hat{A}|| ||\hat{A}^T||} \nonumber \\
            = \frac{1}{(|r| + ||\hat{A}||)^2} \nonumber \\
            \ge \frac{1}{(|r| + ||A|| + ||E||)^2} \nonumber \\
            \ge \frac{1}{(|r| + 1 + \epsilon)^2} \label{eq:intermed_ub}
        \end{align}
        
        We leverage the stability of bounded invertibility in \cite{Kato1995PerturbationOperators} (Stability theorems; pg 196). Finally, we combine the results \ref{eq:initial_ub}, \ref{eq:intermed_ub} and the Corollary for \nameref{corr:stability_bounded_inv} to obtain:
        
        \begin{align}
            ||\Gamma_{\hat{A}}|| \le \frac{||(A + E - rI)^{-1}||}{||Res(r; A+E)^T Res(r; A+E)||} \\
            \le \frac{||(A + E - rI)^{-1}||}{\frac{1}{(|r| + 1 + \epsilon)^2}} \\
            = ||(A + E - rI)^{-1}|| (|r| + 1 + \epsilon)^2 \\
            \le \frac{||(A - rI)^{-1}||}{1 - ||E|| ||(A - rI)^{-1}||} (|r| + 1 + \epsilon)^2 \\
            = \frac{||(A - rI)^{-1}||}{1 - ||E||\ ||(A - rI)^{-1}||} (|r| + 1 + \epsilon)^2
        \end{align}
    \end{proof}

\end{document}